\begin{document}

\title{An exploration of the influence of path choice in game-theoretic attribution algorithms}

\author{\name Geoff M. Ward \email gmw@zest.ai \\
       \name Sean J. Kamkar \email sjk@zest.ai \\
       \name Jay Budzik \email j@zest.ai \\
       \addr Zest AI \\
			  3900 W. Alameda Ave \\
				Burbank, CA 91505
			}

\editor{TBD}

\maketitle

\begin{abstract}

  We compare machine learning explainability methods based on the theory of atomic
  \citep{shapley} and infinitesimal \citep{aumann-shapley} games, in a theoretical and experimental investigation 
	into how the model and choice of integration path can influence the resulting feature attributions.  To gain insight into differences in attributions 
	resulting from interventional Shapley values \citep{sundararajan2019many, janzing2019feature, chen2019explaining} and 
	Generalized Integrated Gradients (GIG) \citep{merrill2019generalized} we note interventional Shapley is equivalent to a 
	multi-path integration along $n!$ paths where $n$ is the number of model input features.  Applying Stoke's theorem we show that the path
	symmetry of these two methods results in the same attributions when the model is composed of a sum of separable functions of 
	individual features and a sum of two-feature products.  We then perform a series of experiments with varying degrees of data missingness 
	to demonstrate how interventional Shapley's multi-path approach can yield less consistent attributions than the single straight-line path 
	of Aumann-Shapley. We argue this is because the multiple paths employed by interventional Shapley extend away from the training data manifold and are therefore more likely to pass 
	through regions where the model has little support.  In the absence of a more meaningful path choice, we therefore advocate the 
	straight-line path since it will almost always pass closer to the data manifold.  Among straight-line path attribution algorithms 
	GIG is uniquely robust since it will still yield Shapley values for atomic games modeled by decision trees.

\end{abstract}

\begin{keywords}
  Explainability, Interpretability, Machine Learning, Game Theory
\end{keywords}

\section{Introduction}

The notion of additive attribution was first introduced into game theory by \cite{shapley} and has been a persistent theme in research literature ever since, due to its broad utility and theoretical underpinning.  Most recently it has found its way into machine learning model attribution algorithms \citep{lundberg2017consistent,lundberg2017unified,lundberg2018consistent,sundararajan2017axiomatic,merrill2019generalized}.  The primary appeal of Shapley game-theory-based attribution techniques can be seen in the many axioms that they attempt to satisfy:
\begin{itemize}
  \item  \textbf{\emph{Efficiency}}: The sum of attributions should equal the change in score,
\begin{equation}
f(\vec{x}_b)-f(\vec{x}_a) = \sum_{i=1}^{n}\phi_i.
\end{equation}
Here $f(\vec{x})$ is a model taking input feature vector $\vec{x}$, $a$ and $b$ are two points in feature space, and $\phi_i$ is the attribution assigned to the $i^{th}$ feature in distinguishing the model output difference between the two points.  Also referred to as \emph{local accuracy} in \cite{lundberg2017consistent}.
\item \textbf{\emph{Sensitivity}}: If the model output changes under the change of one feature then that particular feature must receive some level of attribution.  Also referred to as \emph{consistency} in \cite{lundberg2017consistent}.
\item \textbf{\emph{Invariance}}:  Two models that score every point in space the same way should also yield the same feature attributions.
\item \textbf{\emph{Linearity}}: A model composed of a linear combination of other models must yield attribution corresponding to linear combination of the individual model attributions.
\item \textbf{\emph{Dummy}}: A feature that the model does not depend on should receive zero attribution.  This is grouped under the \emph{sensitivity} axiom in \cite{sundararajan2017axiomatic} and also referred to as \emph{insensitivity}.
\item \textbf{\emph{Symmetry}}:  Feature ordering should not influence attributions in otherwise identically scoring models.  For example, if $f(x_1,x_2) \equiv g(x_2,x_1)$ then any sound attribution algorithm applied to $f(x_1,x_2)$ and $g(x_2,x_1)$ should give $x_1$ and $x_2$ the same attribution between points $a$ and $b$.
\end{itemize}
While other desirable properties exist, these form the core theoretical justification for the choice of Shapley-based techniques for model explainability.  The addition of further axioms gives various Shapley-based attribution algorithms claims of uniqueness \citep{sundararajan2019many}.

The failure of the conditional expectation version of Shapley values \citep{lundberg2017consistent} to meet the \emph{dummy} and \emph{linearity} axioms was recently highlighted in \cite{merrick2019explanation}, \cite{janzing2019feature}, and \cite{sundararajan2019many}.  Switching to unconditional expectations, as provided by Pearl's $do$ operator \citep{pearl2009causality}, resolves the issue.  Borrowing from causal inference, this yields interventional Shapley values (first called Baseline Shapley (BS) in \cite{sundararajan2019many}), which is a generalization of Shapley-Shubik \citep{friedman1999three}.

While Shapley value model attribution techniques are inspired by atomic, cooperative game theory, a parallel body of model attribution techniques exists that is based on infinitesimal, cooperative game theory \citep{aumann-shapley}.  Central to such attribution techniques is the idea of gradient path integrals \citep{sundararajan2017axiomatic,FriedmanPaths,friedman1999three}.  A straight-line path between points represents the simplest choice to satisfy the symmetry-preserving axiom, however, there are many other possible path integral attribution methods, including multi-path ones, that satisfy varying combinations of the desired axioms of additive attribution methods.  The core of all of these algorithms is the need to calculate the gradient of the model prediction function.  The dependence on the gradient of the model prediction function means that all path integration attribution techniques inherently satisfy the sensitivity axiom.  However, the requirement for a model prediction gradient left such methods inapplicable to decision tree models which are not everywhere differentiable.

This shortcoming was solved by the introduction of Generalized Integrated Gradients (GIG) \citep{merrill2019generalized}, which extends Integrated Gradients (IG) \citep{sundararajan2017axiomatic} to provide attribution for models with discontinuities that are hyperplanes orthogonal to features, such as decision trees.  GIG provides an axiomatic method for attribution of mixed typed games involving atomic and non-atomic pieces.  This places GIG in a unique position amongst attribution techniques, particularly with regard to decision trees where atomic game theory-based attribution has been the dominant mode of attribution. 

In reading through research papers on game theory-based attribution methods it becomes clear that the uniqueness claims of each technique generally hinge on a slight variation in the axioms satisfied.  For most machine learning applications that claim to uniqueness is irrelevant to the problem at hand.  This paper demonstrates more clearly the significance of the path \emph{choice} in path-integral attribution methods.  After introducing the GIG decision tree algorithm we then express interventional Shapley as a multi-path integral attribution method so we can compare the multiple interventional Shapley paths with GIG, which uses a single path.  This makes more clear that it is the path \emph{choice}, whose properties are often summarized in axiom form, that sets various game theory-based attribution methods subtly apart from one another.  We show in the studies below that the \emph{choice} of path results in differences in attributions that are best interpreted in light of that choice and can be sensitive to model uncertainty.

\section{Generalized Integrated Gradients}
We begin with a concise description of Generalized Integrated Gradients (GIG).  A full treatment of the mathematical theory behind GIG can be found in \cite{merrill2019generalized}.  GIG extends Integrated Gradients (IG) to provide attribution for models that involve hyperplane discontinuities that are orthogonal to the input feature set, such as those produced by decision trees.
\subsection{GIG basics}
The path choice for GIG is a straight-line between two points.  We refer to the starting point, $x_a$, as the reference point and the ending point, $x_b$, as the explanation point.  In other texts the reference point is called the baseline \citep{sundararajan2019many}.  Between discontinuities, in continuous regions, the attribution along the path is simply
\begin{equation}\label{ig_eq}
 \phi_i = \Delta x_{i_{ab}}\int_{\alpha=0}^{\alpha=1}\frac{\partial f}{\partial x_i}\biggr\rvert_{\vec{x}=\vec{x}_a+\alpha \Delta x_{i_{ab}}} d\alpha,
\end{equation}
where $\Delta x_{i_{ab}}=x_{i_b}-x_{i_a}$. In extending IG, the primary contribution that GIG provides is answering the question of how to attribute when the path intersects a discontinuity.  It should not be entirely surprising that the attribution at discontinuities comes in the form of Shapley values, not defined by feature missingness, but rather orthants (quadrants in two-dimensions):
\begin{equation}
	\phi_i^{disc} = \sum_{S \subseteq N \setminus \{ i \}} \frac{|S|! (N - |S| - 1)!}{N!} (f(\vec{x}_{S \cup \{ i \}}) - f(\vec{x}_{S}))
\end{equation}
where $S$ is the set of orthants surrounding the discontinuity limited to those with $x_i^-$ (before the discontinuity $i^{th}$ feature-orthogonal hyperplane) and $S \cup \{ i \}$ refers to all of the surrounding orthants limited to those with $x_i^+$ (after the discontinuity $i^{th}$ feature-orthogonal hyperplane).  The permutation form of Shapley values,
\begin{equation}
\phi_i^{disc} = \frac{1}{n!} \sum_{j=1}^{n!} f(\vec{x}_{S_{P_j} \cup \{ i \}}) - f(\vec{x}_{S_{P_j}})
\end{equation}
where $S_{P_j}^i$ refers to the subset of features up to the $i^{th}$ feature in the $j^{th}$ feature ordering $P_j$, suggests an interesting alternate interpretation as to how attributions are made effectively by an average over $n!$ paths defined by feature-orthogonal segments infinitesimally close to the discontinuity.  For the simplest case of the path intersecting a single-feature orthogonal discontinuity we note that all of the attribution simply goes to that single feature and is merely the change in value across the discontinuity, which matches intuition.

In the following subsection we describe the practicalities of the GIG decision tree algorithm in some detail.  The subsection after that explains how we calculate and measure the convergence of attribution expectations over a reference population to guarantee convergence while avoiding unnecessary computation.

\subsection{The GIG Decision Tree Algorithm}
Efficiently finding the intersection between decision tree feature splits and the integration path presents a difficult challenge.  It is anticipated that the sorting of feature splits inherently provided by a decision tree can be exploited in a successful algorithm. Furthermore, it is expected that it is possible to simultaneously extract the prediction value between the splits during the identification of intersections.  Algorithm \ref{FindSplits} achieves both of these desired qualities for a single decision tree.

The algorithm is presented in a recursive form.  The path is parametrized by $\alpha$ with the line segment extending from $\alpha_{min}=0$ to $\alpha_{max}=1$.  Starting from the first node, $node=0$, the tree is traversed until a splitting feature is found.  Upon finding a splitting feature the procedure is called recursively on the two new path segments defined by the split.  The segment with smaller values of $\alpha$ is recursively traversed first.  Eventually the region defined by the split with the smallest possible $\alpha$ is reached.  At that point a leaf of the tree has been reached.  The $\alpha$ value of that smallest split is then recorded as well as the leaf value and the feature that caused the split.  The process is complete when all recursions have ended at a leaf and the values have been recorded.

The result of Algorithm \ref{FindSplits} is a sorted listing of $\alpha$'s, the features that caused the split, and the leaf values.  This process must be repeated for each tree in an ensemble.  For ensembles of decision trees whose leaf values are in margin space the alphas from each tree must then be merged across trees to get the actual scores in a constant valued region defined by splits.  The best practice for achieving this merge of sorted values is through heaps, which provide an $O(n_{\alpha}log_2(n_{trees}))$ time merge.

We note that this version of the GIG decision tree algorithm does not deal with path intersections with corners (discontinuities involving more than two orthants).  Extending the algorithm to do so merely requires traversing both branches when a repeated alpha split is identified.  Assigning attribution then requires dealing with the problem of an exponentially growing number of orthants, $2^{n_{orth}}$, which can be dealt with by sampling \citep{kononenko2010efficient,aas2019explaining}.  In the worst case scenario the algorithm visits each leaf of the tree and therefore has an asymptotic upper bound $O(n_{tree}MAX(depth,n_{leaf}/2))$.  If merging splits is required then an additional worst case $\sum_{i=1}^{n_{tree}}n_{leaf_i} log_2(n_{tree})$ merge operation results.

Lastly, we mention that while this algorithm assigns attribution from one point to another, in practice an expectation is taken over many reference points.  An important question to answer when calculating an expected value based on a population is whether that value has converged.  Because expectation of interest is a simple mean of separate path integrals it becomes straightforward to measure their convergence and uncertainty \citep{castro2009polynomial,maleki2013bounding,merrick2019explanation}.

\begin{algorithm}
	\caption{Find values of $\alpha$ for discontinuities produced by a single decision tree}
	\label{FindSplits}
	\begin{algorithmic}
	\State $tree \gets feature$, $threshold$, $value$, $left$, $right$
	\State $path \gets x_{start},x_{end}$
	\State $\alpha_{splits} \gets \lbrack \quad \rbrack$
	\State $feature_{splits} \gets \lbrack \quad \rbrack$
	\State $leaf_{splits} \gets \lbrack \quad \rbrack$
	\State $Initial$ $node \gets 0$	
	\State $Initial$ $\alpha_{min},\alpha_{max} \gets 0,1$		
	\Procedure{FindPathTreeSplits}{$\alpha_{min}$,$\alpha_{max}$,$node$}
		\If{$left \lbrack node \rbrack == right \lbrack node \rbrack$}[Leaf node was reached]
			\State $leaf_{splits}.append(value \lbrack node\rbrack)$
			\State $feature_{splits}.append(feature \lbrack node \rbrack)$
			\State $\alpha_{splits}.append(\alpha_{max})$
		\Else
			\State $f_{\alpha_{min}} = x_{start}\lbrack feature\lbrack node\rbrack\rbrack \alpha_{min}+ x_{end}\lbrack feature\lbrack node\rbrack\rbrack(1-\alpha_{min})$
			\State $f_{\alpha_{max}} = x_{start}\lbrack feature\lbrack node\rbrack\rbrack \alpha_{max}+ x_{end}\lbrack feature\lbrack node\rbrack\rbrack(1-\alpha_{max})$
			\If{$f_{\alpha_{min}} < threshold\lbrack node\rbrack$}
				\State $node_{\alpha_{min}} = left\lbrack node\rbrack$
			\Else
				\State $node_{\alpha_{min}} = right\lbrack node\rbrack$
			\EndIf
			\If{$f_{\alpha_{max}} < threshold\lbrack node\rbrack$}
				\State $node_{\alpha_{max}} = left\lbrack node\rbrack$
			\Else
				\State $node_{\alpha_{max}} = right\lbrack node\rbrack$
			\EndIf
			\If{$node_{\alpha_{min}} == node_{\alpha_{max}}$} [ Feature split did not intersect the path]
				\State FindPathTreeSplits$(\alpha_{min},\alpha_{max},node_{\alpha_{min}})$
			\Else
				\State $f_{mid}=threshold\lbrack node\rbrack$
				\State $f_{start}=x_{start}\lbrack feature\lbrack node\rbrack \rbrack$		
				\State $f_{end}=x_{end}\lbrack feature\lbrack node\rbrack \rbrack$							
				\State $\alpha_{mid} = (f_{mid}-f_{start})/(f_{end}-f_{start})$
				\State FindPathTreeSplits$(\alpha_{min},\alpha_{mid},node_{\alpha_{min}})$
				\State FindPathTreeSplits$(\alpha_{mid},\alpha_{max},node_{\alpha_{min}})$
			\EndIf
		\EndIf
	\EndProcedure
\end{algorithmic}
\end{algorithm}

\section{Attribution from path integration}
The previous section described the extension of the straight-line path integration technique of Aumann-Shapley to deal with a subset of discontinuous scoring functions, including decision trees.  In the following subsections we explore path choice in more depth.  To begin we expose the fact that the interventional Shapley values attribution algorithm is actually an $n!$ multi-path integration attribution algorithm.  While this has already been mostly noted by \cite{sundararajan2019many}, we emphasize it more directly with respect to actual gradient path integration. Even though gradients are irrelevant for truly atomic games, we find it still provides some insights into differences in attribution between the methods.

\subsection{Interventional Shapley values from path integrals}
For atomic games \cite{shapley} introduced the additive attribution
\begin{equation}
	\phi_i = \sum_{S \subseteq N \setminus \{ i \}} \frac{|S|! (N - |S| - 1)!}{N!} (\nu(f,\vec{x}_{S \cup \{ i \}}) - \nu(f,\vec{x}_{S}))
\end{equation}
where $\nu$ is the \emph{lift} that specifies how $f$ should be evaluated under feature missingess and $S$ are subsets of features of varying degrees of feature missingness.  To meet the requirement of the \emph{dummy} axiom it was noted by \cite{janzing2019feature} and \cite{sundararajan2019many} that the use of unconditional expectations were necessary when evaluating the scoring function under feature missingness.  This yields an expectation that is merely an average of the attribution relative to a single reference point considered at a time,
\begin{equation}
	\phi_i(\vec{x}) = \frac{1}{n_{ref}}\sum_{j=1}^{n_{ref}}\hat{\phi}_i(\vec{x},\vec{x}_{{ref}_j}),
\end{equation}
where the attribution relative to a single reference is defined by
\begin{equation}
	\hat{\phi}_i(\vec{x},\vec{x}_{ref}) = \sum_{S \subseteq N \setminus \{ i \}} \frac{|S|! (N - |S| - 1)!}{N!} (f(\vec{x}_{S \cup \{ i \}},\vec{x}_{\overline{S \cup \{ i \}}}^{ref}) - f(\vec{x}_{S},\vec{x}_{\overline{S}}^{ref})).
\end{equation}
Here the over bar is used to denote the subset of missing features.  We see that evaluation under missingness for interventional Shapley attribution simply means replace the feature value with that of the reference sample's value.

To show that this is equivalent to a multi-path integration attribution method, recall that the first fundamental theorem of vector calculus states that 
\begin{equation}
f(\vec{x}_b)-f({\vec{x}_a}) = \int_{\vec{x}_a}^{\vec{x}_b}\vec{\nabla}f(\vec{x}) \cdot d\vec{x}
\end{equation}
for any given path between $\vec{x}_a$ and $\vec{x}_b$.  Path integral attribution techniques, as in equation \ref{ig_eq}, simply decompose this equation by gradient component when assigning credit.  If the path of integration is strictly orthogonal to a feature then only that feature receives any attribution and in accordance with the first fundamental theorem of vector calculus the attribution is equivalent to simply the change in the function.  All of the integration work needed to render credit assignment to different features becomes unnecessary for such orthogonal paths.

Now, returning to the permutation form of Shapley's additive attribution,
\begin{equation}
\phi_i = \frac{1}{n!} \sum_{j=1}^{n!} \nu(f,\vec{x}_{S_{P_j}^i \cup {i}})-\nu(f,\vec{x}_{S_{P_j}^i}),
\end{equation}
where $S_{P_j}^i$ refers to the subset of features up to the $i^{th}$ feature in the $j^{th}$ feature ordering $P_j$,  recalling from GIG that this can be thought of in terms of path integration, we see that interventional Shapley values are just a gradient path integration along the $n!$ paths that traverse in segments orthogonal to each of the input features corresponding to a feature ordering permutation.  This summation represents bringing together all of the subsections of the $n!$ paths that are orthogonal to the $i^{th}$ feature.  Any need to actually perform path integrals is avoided despite interventional Shapley values being equivalent to a path integration approach.  While most atomic games don't even have a meaningful gradient between states, this interpretation of interventional Shapley values is still highly informative in understanding what they actually represent in relation to other path integration techniques.  In two-dimensions the two interventional Shapley paths are depicted in the rightmost panel of Figure \ref{fig_shappaths}. 

While there are many possible choices of paths, the family corresponding to interventional Shapley values have the additionally desirable property that a nonlinear transformation of any feature,
\begin{equation}
x_i' = f(x_i)
\end{equation}
does not alter the resulting interventional Shapley value.  Alternatively, any nonlinear transformation of a feature effectively modifies the path and therefore the attribution in other path integration attribution techniques.  This includes the path of \emph{serial cost sharing}  (see middle panel of Figure \ref{fig_shappaths}), which presents an alternative single path integration method based on connected straight-line segments that, unlike Aumann-Shapley, does not exhibit feature scale invariance.  In fact there are an infinite number of multi-path methods satisfying the core attribution axioms.  Notably, it is always possible to choose any path so long as $n!$ symmetry complementing paths are also considered in equal weight.

\begin{figure}
\centering
\includegraphics[width=.3\textwidth]{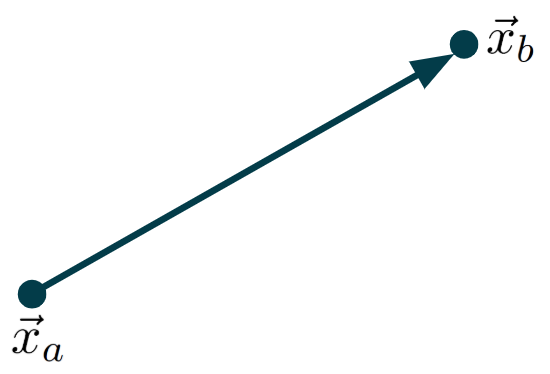}\hfill
\includegraphics[width=.3\textwidth]{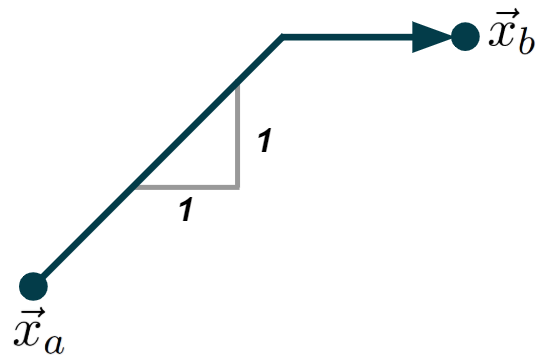}\hfill
\includegraphics[width=.3\textwidth]{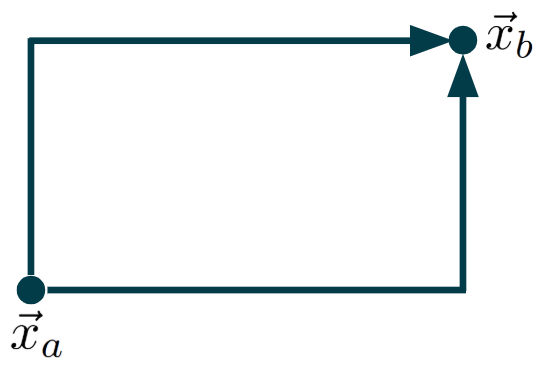}
\caption{The various paths associated with Shapley model attribtuion techniques: \emph{(Left)} The single straight path of \emph{Aumann-Shapley}. \emph{(Center)} The single path of \emph{serial cost sharing}, advancing each feature at the same rate until their final value is reached. \emph{(Right)} The multiple paths of \emph{interventional Shapley}.  In two-dimensions there are just two, but in the general case of $n$ model input features there are $n!$.}
\label{fig_shappaths}
\end{figure}

\subsection{From Aumann-Shapley to Shapley values}
What connection, if any, can be made between Shapley and Aumann-Shapley attribution values?  In many situations taking the expectation over a set of references for gradient integration techniques seems to yield something that approaches Shapley values.  A trivial example of where this holds true is for linear models. However, there are many other situations where this also seems to be the case, as was seen in the two ovals example in \cite{merrill2019generalized}.  The reason is not that difficult to parse out: the Shapley value equation involves model evaluation differencing in a way that behaves like an averaging of gradients over the domain of possible values under missingness.  This was highlighted in the previous subsection which showed that interventional Shapley values can be thought to come from an equivalent multi-path gradient integration approach.  Generally speaking though, one would think that the closer the scoring function is to linear and the more the reference population used is uniformly and independently distributed then path integration techniques will be close to Shapley values when an expectation is taken over the reference population.  In the following paragraph we demonstrate that for some scoring functions there is an equivalence for certain path integration methods that extends beyond linear functions.

In three-dimensions, Stoke's Theorem gives some insight into when the path is completely irrelevant.  From vector calculus, recall,
\begin{equation}
\int_C \vec{F}\cdot d\vec{r} = \iint_S \vec{\nabla} \times \vec{F} \cdot d\vec{S},
\end{equation}
where $C$ is some contour, $\vec{F}$ is a vector field, and $S$ is a surface with $C$ as its edge.  If we take
\begin{equation}
\vec{F} = (\frac{\partial f}{\partial x_1},0,0)
\end{equation}
and consider the two paths seen in Figure \ref{fig_stokes}, then Stoke's theorem gives
\begin{equation}
\int_{C_2} \vec{F}\cdot d\vec{r}-\int_{C_1} \vec{F}\cdot d\vec{r} = \iint_S \vec{\nabla} \times \vec{F} \cdot d\vec{S},
\end{equation}
where $S$ is the surface with edge defined by $C_1+C_2$.  The curl of the vector field is
\begin{equation}
\vec{\nabla} \times \vec{F} = (0,\frac{\partial^2 f}{\partial x_1 \partial x_3},-\frac{\partial^2 f}{\partial x_1 \partial x_2})
\end{equation}
we see then that, at least in up to three-dimensions, 
\begin{equation}
\frac{\partial^2 f}{\partial x_i \partial x_j} \equiv 0,
\hspace{2pt} \forall \hspace{2pt} i,j, \hspace{2pt} i \ne j \Rightarrow 
\int_{C_1} \vec{F} \cdot d \vec{r} = 
\int_{C_2} \vec{F} \cdot d \vec{r}
\end{equation}
implies that the path taken is irrelevant to assigning attribution for twice differentiable continuous scoring functions satisfying the differential requirement.  Even though Stoke's theorem doesn't apply to discontinuous functions, a more general conclusion can be made:
\begin{theorem}
For any function of the form
\begin{equation}
f(\vec{x}) = const+\sum_{i=1}^n f_i(x_i),
\end{equation}
the only possible attribution for a point $\vec{x}_b$ relative to a point $\vec{x}_a$ that satisfies the \emph{efficiency}, \emph{sensitivity}, \emph{invariance}, \emph{linearity}, \emph{dummy}, and \emph{symmetry} axioms is the path independent attribution
\begin{equation}
\phi_i = f_i(x_{i_b})-f_i(x_{i_a})
\end{equation}
\end{theorem}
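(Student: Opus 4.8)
The plan is to exploit the additive-separable structure together with the \emph{linearity} axiom to reduce the problem to $n$ independent single-feature subproblems, each of which is completely pinned down by the \emph{dummy} and \emph{efficiency} axioms. First I would write $f$ as a sum of $n+1$ constituent models: the constant model $g_0(\vec{x}) = const$ and, for each $k$, the model $g_k(\vec{x}) = f_k(x_k)$ regarded as a function on the \emph{full} feature space that merely happens to depend on its $k$th coordinate alone. Since $f = g_0 + \sum_{k=1}^n g_k$ as an identity of models, the linearity axiom yields $\phi_i = \phi_i[g_0] + \sum_{k=1}^n \phi_i[g_k]$ for every target feature $i$, where $\phi_i[g]$ denotes the attribution that the algorithm assigns to feature $i$ when applied to model $g$.

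Next I would evaluate each term of this decomposition. The constant model contributes nothing, since it depends on no feature and so the dummy axiom forces $\phi_i[g_0] = 0$ (consistently, efficiency requires the total to equal $const - const = 0$). For the off-diagonal components, each $g_k$ with $k \neq i$ is independent of feature $i$, so the dummy axiom again gives $\phi_i[g_k] = 0$. Hence every term drops out except the diagonal one, leaving $\phi_i = \phi_i[g_i]$.

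It then remains to compute the attribution of the single-feature model $g_i$ to its own feature. Because $g_i$ depends only on $x_i$, the dummy axiom forces $\phi_m[g_i] = 0$ for all $m \neq i$, while efficiency requires the attributions of $g_i$ to sum to $g_i(\vec{x}_b) - g_i(\vec{x}_a) = f_i(x_{i_b}) - f_i(x_{i_a})$. With every attribution but one constrained to vanish, the surviving attribution must absorb the entire change, so $\phi_i[g_i] = f_i(x_{i_b}) - f_i(x_{i_a})$. Combining this with the previous step gives $\phi_i = f_i(x_{i_b}) - f_i(x_{i_a})$, an expression depending only on the two endpoints and therefore manifestly path-independent, as claimed.

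The main obstacle is conceptual rather than computational: one must justify that the linearity axiom is being invoked legitimately, namely that each $f_k(x_k)$ genuinely qualifies as a standalone model on the common feature space so that the dummy axiom applies to it. Once the decomposition $f = g_0 + \sum_k g_k$ is established at the level of models and not merely as an algebraic rearrangement, the remaining steps are immediate. I would also remark that \emph{symmetry}, \emph{sensitivity}, and \emph{invariance} are never used in the argument; \emph{linearity}, \emph{dummy}, and \emph{efficiency} already determine the attribution uniquely, so the conclusion in fact holds under this weaker set of hypotheses.
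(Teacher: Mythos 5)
Your proof is correct and follows essentially the same route as the paper's: use \emph{linearity} to decompose $f$ into its single-feature constituents, then apply \emph{dummy} and \emph{efficiency} to each constituent to force all attribution onto its own feature as the simple score difference. Your version is simply a more careful, fully spelled-out rendering of the paper's terse argument, and your closing observation that \emph{linearity}, \emph{dummy}, and \emph{efficiency} alone suffice is a valid sharpening that the paper leaves implicit.
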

\begin{proof} 
Emphasizing that any such function is a linear combination of functions, in order to satisfy \emph{linearity}, each function must independently satisfy all of the desired axioms.  Because each function only depends on a single input feature then the only way it can satisfy the remaining axioms is to attribute to that single feature the simple difference in the resulting score it produces.
\end{proof}
From this theorem and Stoke's theorem we can also see that models of the form
\begin{equation}
f(\vec{x}) = const+\sum_{i=1}^n f_i(x_i)+\sum_{j=1}^n\sum_{i=1}^n m_{ij} x_i x_j
\end{equation}
will yield a simple shift in attributions since $\frac{\partial^2 f}{\partial x_i \partial x_j}=m_{ij}$.  The shift will be linearly proportional to the area swept out between the two paths.  If we consider interventional Shapley's two paths in conjunction with Aumann-Shapley's straight path in two-dimensions (see center Figure \ref{fig_stokes}), Stoke's theorem for such functions gives
\begin{equation}
\int_{C_2} \vec{F}\cdot d\vec{r}-\int_{C_3} \vec{F}\cdot d\vec{r} = -\iint_{S_2} m_{ij} dS
\end{equation}
and
\begin{equation}
\int_{C_1} \vec{F}\cdot d\vec{r}-\int_{C_3} \vec{F}\cdot d\vec{r} = \iint_{S_1} m_{ij} dS.
\end{equation}
Adding the two together, realizing that $S_1$ and $S_2$ sweep out the same area, we see that 
\begin{equation}
\frac{1}{2}\int_{C_1} \vec{F}\cdot d\vec{r}+\frac{1}{2}\int_{C_2}\vec{F}\cdot d\vec{r} = \int_{C_3} \vec{F}\cdot d\vec{r}
\end{equation}
which indicates that the straight-line path yields the same attribution as interventional Shapley values.  In fact any two paths falling entirely in the same plane that sweep out the same area and are on opposite sides of the straight-line path will lead to interventional Shapley values.  The surface swept out in higher dimensions becomes more complex along with the $n!$ paths, but it is not hard prove that the result still holds true:

\begin{theorem}
Given an explanation point and a reference point in $n$-dimensions, all functions of the form
\begin{equation}
f(\vec{x}) = const+\sum_{i=1}^n f_i(x_i)+\sum_{j=1}^n\sum_{i=1}^n m_{ij} x_i x_j
\end{equation}
yield the same attributions from Aumann-Shapley and interventional Shapley.
\end{theorem}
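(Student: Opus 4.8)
The plan is to lean on the \emph{linearity} axiom, which both Aumann-Shapley and interventional Shapley satisfy, in order to reduce the claim to a single building block. Since attribution is additive in $f$, it suffices to show the two methods agree on each additive piece separately: the constant term, each separable term $f_i(x_i)$, each diagonal quadratic term $m_{ii}x_i^2$, and each off-diagonal cross term $m_{ij}x_ix_j$ with $i\neq j$. The constant receives zero attribution from both methods by \emph{efficiency} combined with \emph{dummy}. Each separable term $f_i(x_i)$ and each diagonal term $m_{ii}x_i^2$ depends on a single feature, so the previous theorem applies verbatim: the only axiom-compatible attribution is $f_i(x_{i_b})-f_i(x_{i_a})$ (respectively $m_{ii}(x_{i_b}^2-x_{i_a}^2)$), and both methods produce exactly this. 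All the genuine content therefore lives in the cross terms.

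For a single cross term $g(\vec{x})=x_ix_j$ with $i\neq j$, I would compute both attributions in closed form. For Aumann-Shapley, parametrize the straight line by $\vec{x}(\alpha)=\vec{x}_a+\alpha(\vec{x}_b-\vec{x}_a)$ and apply equation \ref{ig_eq}. Since $\partial g/\partial x_i=x_j$, the integrand along the path is $x_{j_a}+\alpha\,\Delta x_{j_{ab}}$, whose integral over $[0,1]$ is the midpoint value $(x_{j_a}+x_{j_b})/2$, giving
\begin{equation}
\phi_i^{AS}=\Delta x_{i_{ab}}\cdot\frac{x_{j_a}+x_{j_b}}{2},
\end{equation}
and symmetrically for $\phi_j^{AS}$. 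For interventional Shapley I would use the permutation form: in any feature ordering, the marginal contribution of feature $i$ to the monomial $x_ix_j$ equals $\Delta x_{i_{ab}}\,x_{j_b}$ when $j$ precedes $i$ and $\Delta x_{i_{ab}}\,x_{j_a}$ when $j$ follows $i$, because only the current value of $x_j$ (reference or explanation) enters the product and every feature other than $i$ and $j$ is irrelevant to this term.

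The crux is then a counting argument: among the $n!$ orderings, exactly half place $j$ before $i$ and half place $j$ after $i$, independent of where the remaining $n-2$ features sit. Averaging the two marginal contributions with equal weight yields $\phi_i^{IS}=\Delta x_{i_{ab}}(x_{j_a}+x_{j_b})/2=\phi_i^{AS}$, and identically for $\phi_j$; every feature $k\notin\{i,j\}$ receives zero from both methods. Summing over all pieces via \emph{linearity} completes the proof. I expect the main obstacle to be purely combinatorial bookkeeping: one must confirm that the marginal contribution of the cross term depends only on the \emph{relative} order of $i$ and $j$ and not on the positions of the other features, so that the $n!$ sum collapses to a clean factor of $\tfrac12$ regardless of dimension. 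This is precisely the higher-dimensional analogue of the ``equal swept area on opposite sides of the straight-line path'' observation made above for the two-path, two-dimensional case, recast combinatorially so that the surface-integral geometry never has to be tracked explicitly.
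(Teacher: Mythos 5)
Your proof is correct, but it takes a genuinely different route from the paper. The paper works geometrically: it only invokes the previous theorem for the separable part, then handles the quadratic part by applying Stoke's theorem to the surface between each interventional Shapley permutation path and the straight line, constructing that surface as a strip of triangles and pairing each feature ordering with its inverse ordering so that the two strips have opposite orientation and their curl fluxes cancel. You instead push the \emph{linearity} reduction all the way down to individual monomials and then compute both attributions in closed form: for a cross term $x_ix_j$, Aumann-Shapley gives $\Delta x_{i_{ab}}\,(x_{j_a}+x_{j_b})/2$ by the midpoint integral, while interventional Shapley's permutation sum collapses because the marginal contribution of $i$ is $\Delta x_{i_{ab}}\,x_{j_b}$ or $\Delta x_{i_{ab}}\,x_{j_a}$ according only to the relative order of $i$ and $j$, and the swap bijection on orderings makes these two cases occur with weight exactly $\tfrac12$ each, independent of $n$. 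Your combinatorial argument is sound (and your closed forms check out, including the diagonal terms $m_{ii}x_i^2$ via the previous theorem), and it is arguably tighter than the paper's: the higher-dimensional Stoke's construction with triangle strips is described somewhat informally, whereas your computation is elementary, dimension-independent, and sidesteps any differentiability or surface-orientation bookkeeping. What the paper's geometric approach buys in exchange is the interpretive picture --- attribution differences as curl flux through the area swept between paths --- which yields the paper's further observation that \emph{any} two paths in a plane sweeping equal areas on opposite sides of the straight line also reproduce interventional Shapley values; that generalization is not visible from your monomial computation. Note also that your counting argument is essentially the paper's inverse-ordering pairing in combinatorial clothing: averaging an ordering with its reverse is exactly what symmetrizes $x_{j_a}$ and $x_{j_b}$ into the midpoint, so the two proofs cancel the same asymmetry by different means.
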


\begin{proof}  
To begin we draw on the previous theorem that proved that attribution for $const+\sum_{i=1}^n f_i(x_i)$ is completely path independent and therefore Aumann-Shapley and interventional Shapley yield the same attributions for that portion.  Our attention is then only needed on $\sum_{j=1}^n\sum_{i=1}^n m_{ij} x_i x_j$.  Because it is twice differentiable we can leverage Stoke's theorem to fill in the missing piece of the proof.

We note that in higher dimensions any differential surface element will have a Stoke's curl flux through it that is merely a linear combination of the $m_{ij}$:
\begin{equation}
\sum_{i=1}^n\sum_{j=1}^n \alpha_{ij}m_{ij} \vert dS \vert,
\end{equation}
where the coefficients $\alpha_{ij}$ depend only on the orientation of the surface element.  To prove equivalence it is only necessary to show that the surfaces swept out by path pairs from the $n!$ interventional Shapley paths yield cancellation due to the sign of the Stoke's curl flux being flipped when the orientation of the surface is flipped.  Any further understanding of Stoke's theorem in higher dimensions is not needed to prove the theorem.

Next, recall that each of the $n!$ paths associated with interventional Shapley can be thought of in terms of orthogonal path segments defined by a feature ordering.  For instance, an ordering might be $x_5, x_3, \cdots, x_8, x_2, \cdots, x_1, x_7$, which specifies the first orthogonal segment changes feature $x_5$ from its reference value to explanation point value, followed by similar orthogonal segment for $x_3$, and so on until the last orthogonal segment is completed by changing $x_7$.  In applying Stoke's theorem in higher dimensions it becomes possible to pick any surface that has the path of interest as its edge.  First, take the edge of interest as a single interventional Shapley ordering path and the straight-line Aumann-Shapley path.  For the purposes of proving the theorem, consider the surface to be defined by a strip of triangles.  The triangles in the strip alternate between having two corners from the interventional Shapley ordering path corners and one corner from a point along the center line and two corners from points along the center line path and one corner from an interventional Shapley ordering path corner.  The points along the straight center line are defined by lines passing through the corners of the interventional Shapley ordering path that are perpendicular to the center line.  The surface orientation of each of these triangles is defined by the right hand rule and the interventional Shapley path feature ordering.

To complete the proof we note that for every interventional Shapley feature ordering path there is an inverse feature ordering path.  For instance, $x_7, x_1, \cdots, x_2, x_8, \cdots, x_3, x_5$ is the inverse feature ordering for $x_5, x_3, \cdots, x_8, x_2, \cdots, x_1, x_7$.  In simply switching the feature ordering, we see that the two resulting triangle strips contain all of the same shaped triangles except in the exact opposite ordering and therefore opposite orientation (see Figure \ref{fig_stokes} right), thus resulting in cancellation of their Stoke's curl flux when added together.  For $n>1$ we see that all interventional Shapley feature ordering paths have an inverse feature ordering path.  Therefore Aumann-Shapley and interventional Shapley attributions for such functions will be the same for any dimensionality.  The case of $n=1$ yields a trivial special case where interventional Shapley has a single path that is identical to the path choice for Aumann-Shapley.
\end{proof}

\begin{figure}
\centering
\includegraphics[height=.23\textwidth]{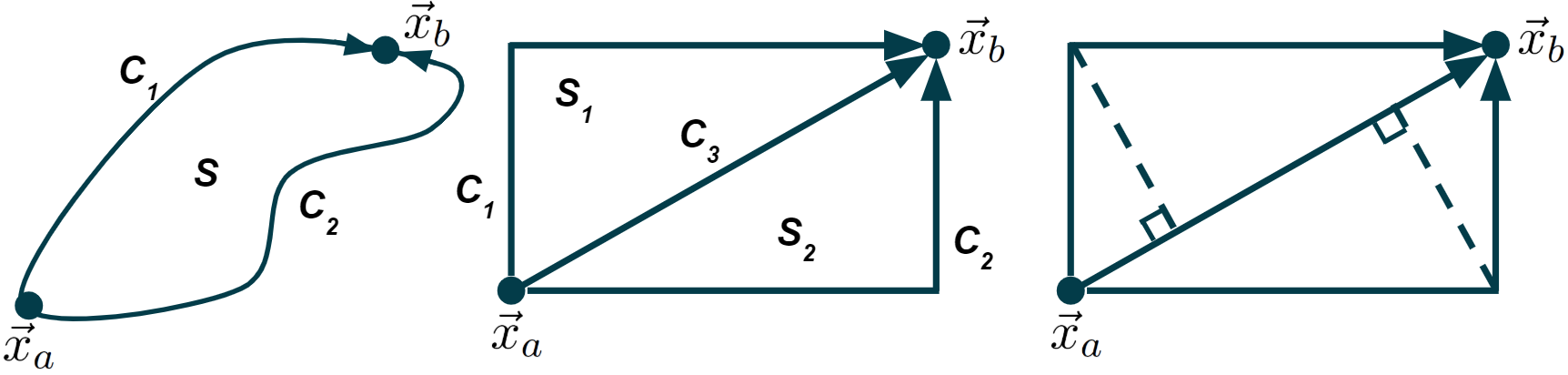}
\caption{(\emph{Left}) Two potential path's for attribution.  In three-dimensions, for twice differentiable functions, the difference in resulting attribution can be recovered from Stoke's theorem. (\emph{Center}) Interventional Shapley and Aumann-Shapley paths yield a symmetry in the surface they sweep out with respect to the straight-line path that yields gradient path integration equivalence for nonlinear scoring functions with constant mixed partial derivatives. (\emph{Right})  The strips of triangles swept out by the only two paths in two dimensions, each being the inverse ordering path of the other.}
\label{fig_stokes}
\end{figure}

From this simple application of Stoke's theorem we see that for single path gradient integration techniques the resulting expected attributions begin to clearly diverge from interventional Shapley values when non-linearity is not feature-separable.  When the reference population highlights regions with such behavior then the values can further diverge, indicating that for many applications a uniformly and independently distributed set of reference points should also yield attributions that more closely match Shapley values.  The symmetry of the straight-line path and interventional Shapley paths has extended equivalence for functions that are not fully feature-separable, limited to those built with two-feature products.  For certain problems, the expectation over the reference population may still yield cancellation of the curl surface integral between different references even when there is non-linearity that is not feature-separable, but it is easy to see that level of symmetry is likely rare.

\subsection{Markov chain paths}
In applying game theory-based attribution algorithms to modern machine learning problems, the fact that the path used yields a particular axiomatic uniqueness that is irrelevant to the problem at hand is almost never given consideration.  This indicates something important is overlooked in such an arbitrary path choice.  There are alternative possibilities for defining the path in a more meaningful way based on auxiliary information that will satisfy the desired axioms.  In the following discussion we note that other path choices can easily be justified.  Indeed, as long as the path choice has a symmetric portion and a portion solely defined by feature identifying information then it will be symmetry-preserving.

For some applications, a possible way to define a meaningful path is through transition probabilities, $P(\vec{x}^\prime\vert \vec{x})$, in what is known as a stochastic game and was first outlined in \cite{shapley1953stochastic}.  The implication being that attributions are sought under the constraint of possible ways to transition between states.  That is not the only possible intention behind assigning attribution, but thinking about it highlights the influence of path choice in forming attribution.

For a single path attribution technique, a logical path choice for assigning attribution with respect to transition is that which maximizes the Markov chain probability,
\begin{equation}
\prod_{i=0}^{m-1} P(\vec{x}_{i+1}\vert \vec{x}_i),
\end{equation}
with the constraint that the path passes through the desired beginning and ending points and does not involve zero delta features ($\Delta x_{i_{ab}}=0$).  This path choice satisfies all of the desired axioms, but does not guarantee invariance under nonlinear transformation of features.

In a sense, the choice of a straight-line path assumes that the transition probability is equal in all directions and takes the most likely path as the dominant contributor to attribution.  Similarly, the \emph{serial cost sharing} path assumes that transition is limited to equal parts in all features until their final value is reached.  Alternatively, interventional Shapley can be thought of as assuming that the only transitions possible are those that change a single feature at a time completely from its starting to ending value.

Indeed, that is the notion of an atomic game, which is quite useful when it applies.  After all, there is no need to consider what score would result when adding half of a player to a sporting team.  Half of a player is a nonsensical proposition, but most machine learning applications do not meet the notion of an atomic game.  Variables are continuously-valued.

Remarkably, for atomic games that are modeled by decision tree, the straight-line path used in GIG will recover the correct Shapley values because its attribution at discontinuities is defined precisely as the Shapley value.  This makes for a more unified approach between piecewise constant, continuous, and piecewise continuous scoring functions.

\section{Attribution Experiments}

To further demonstrate the impact of path choice in assigning attribution we use a toy two-dimensional problem involving binary classification where model uncertainty plays a role.  While it is possible to synthesize a model that will yield significant differences in the resulting attribution, something more realistic provides greater potential insight as to what to expect from real machine learning problems.  For the class probability we take a simple binomial distribution with variation in feature space defined by a hyperbolic tangent function which introduces a transition from one class to another,
\begin{equation}
P(Class \hspace{1pt} 2 \vert \vec{x})=\frac{1}{2}+\frac{1}{2}tanh \left( \frac{1}{\delta}(x_2-f(x_1)) \right)
\end{equation}
with $\delta=\frac{1}{8}$ and
\begin{equation}
f(x_1)= -\left(4(x_1-1/2)\right)^{11}-x_1+1.
\end{equation}
The resulting probability field can be seen in Figure \ref{fig_prob}.  This choice of distribution is made solely to highlight the difference between the path choices of interventional Shapley and Aumann-Shapley.  If the distribution of features is uniform and independent then leveraging any of the path integration methods and taking an expectation over the entire population as a reference will yield similar results with subtle differences.  The following subsections exposes the cause behind those subtle differences by focusing on a particular reference population and a few explanation points belonging to $Class \hspace{2pt}2$ and falling along the line $x_2=x_1$.

\begin{figure}
\centering
\includegraphics[height=.5\textwidth]{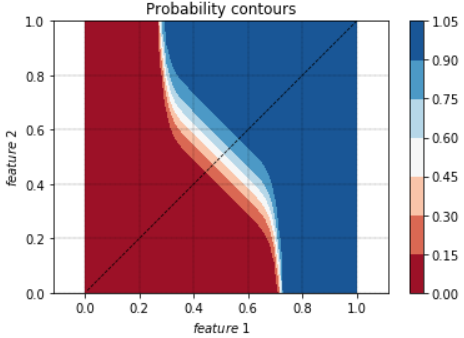}
\caption{A toy probability distribution defined by a hyperbolic tanget with a polynomial center line.}
\label{fig_prob}
\end{figure}

\subsection{Double Gaussian distributed features}
Consider a distribution for $P(\vec{x})$ defined by a two-variable double Gaussian distribution,
\begin{equation}
P(\vec{x}) = P_{gauss_0}(\vec{x})+P_{gauss_1}(\vec{x})
\end{equation}
Each of the two Gaussians contributes to the total $P(\vec{x})$ according to
\begin{equation}
P_{gauss}(\vec{x}) \propto e^{-\frac{1}{2}(\vec{x}-\vec{\mu})^T\Sigma^{-1}(\vec{x}-\vec{\mu})}.
\end{equation}
For $P_{gauss_0}$ let $\vec{\mu}=(0,0)$ and for $P_{gauss_1}$ let $\vec{\mu}=(1,1)$.  The variance $\Sigma$ is taken to be the same for each Gaussian and such that it is diagonal when rotated by $45^{^\circ}$.  In the rotated coordinates $\sigma_{45^{^\circ}}=\frac{2}{5}$ and $\sigma_{135^{^\circ}}=\frac{1}{5}$.  This distribution shows a high degree of correlation between features.  We take a baseline sampling from each Gaussian of $300$ points for a total of $600$ points from the two combined.  These points can be seen in Figure \ref{fig_exp1} along with the resulting contours of an ExtraTreesClassifier \citep{geurts2006extremely,scikit-learn} trained from them.  Without the wider range of feature values, the contours are highly symmetric and for points along the center line we expect equal attribution for both features will result with respect to the reference population defined by the two sampled Gaussians.  To verify this we take a selection of $20$ points along the center line belonging to $Class \hspace{2pt} 2$ (see Figure \ref{fig_exp1_subset}) and calculate the attributions (see Figure \ref{fig_exp1_attr}).  The ratio, seen to the right in Figure \ref{fig_exp1_attr}, demonstrates nearly equal attribution for all points explored.  Closer to the transition regions, the attributions decrease, in accordance with the ExtraTreesClassifier score.

\begin{figure}
\centering
\includegraphics[height=.33\textwidth]{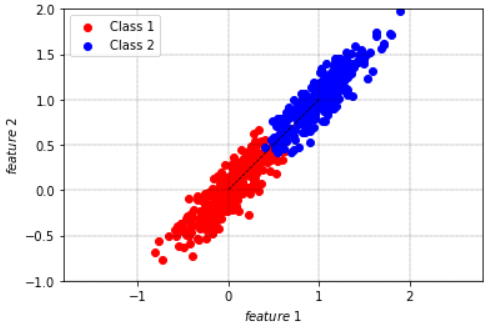}\hfill
\includegraphics[height=.34\textwidth]{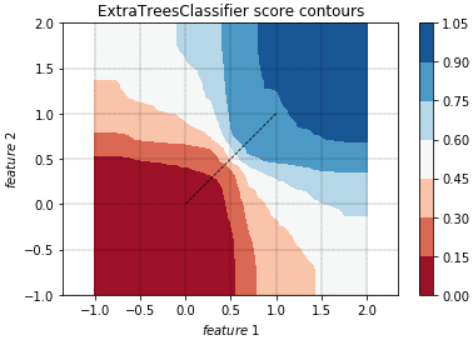}
\caption{\emph{(Left)} The $600$ points of the double Gaussain distribution.  The class smoothly transitions from one to another along the center line.  \emph{(Right)}  The resulting prediction contours of a ExtraTreesClassifier trained from the double Gaussain distribution samples shows a high degree of symmetry about the center line.}
\label{fig_exp1}
\end{figure}

\begin{figure}
\centering
\includegraphics[height=.5\textwidth]{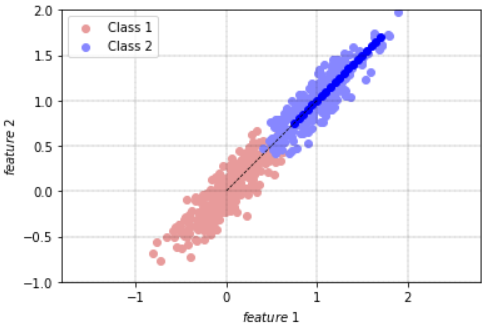}
\caption{A subset of the data is taken for comparison.  The subset is along the center-line of the $Class \hspace{2pt} 2$ population and is seen in darker blue over the general double Gaussian generated population in a lighter shade.}
\label{fig_exp1_subset}
\end{figure}

\begin{figure}
\centering
\includegraphics[height=.32\textwidth]{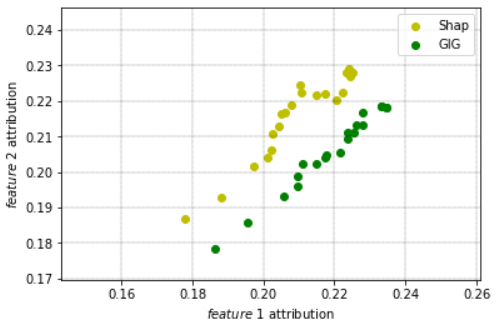}\hfill
\includegraphics[height=.32\textwidth]{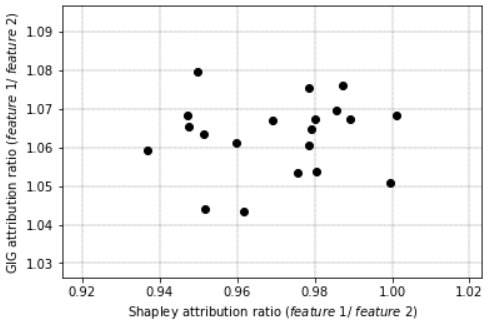}
\caption{\emph{(Left)} The attributions assigned by interventional Shapley and GIG for the double Gaussian distribution of Figure \ref{fig_exp1}. \emph{(Right)}  The ratio of the attributions shows roughly unity for the subset of points along the center line from Figure \ref{fig_exp1_subset}.}
\label{fig_exp1_attr}
\end{figure}


\subsection{Double Gaussian distributed features with a uniform distributed background}
Next we add a \emph{background} uniform distribution of points to the two-variable double Gaussian distribution,
\begin{equation}
P(\vec{x}) = P_{gauss_0}(\vec{x})+P_{gauss_1}(\vec{x})+P_{uniform}(\vec{x}).
\end{equation}
We use this \emph{background} uniform distribution as a means to illuminate the true nature of the probability distribution in the wider region.  We assume that the double Gaussian distribution population always far exceeds that from the uniform distribution.  To avoid requiring a large sampling from the double Gaussian distribution we therefore simply ignore the uniform distribution population when assigning attributions.  In other words all attributions are based solely on the original $600$ samples from the double Gaussian distribution.  This simply highlights the contributions of that particular population, an important thing to understand when considering any wider set of reference population distributions used for estimating attribution expectations.

At first we introduce a mere $25$ points taken from a uniform distribution.  The resulting sample of points and ExtraTreesClassifier score contours are seen in Figure \ref{fig_exp1_25}.  With only a few points covering the background the decision tree provides a conservative estimate of the true probability distribution reflecting the lack of confidence it has due to limited samples.  The attributions from interventional Shapley immediately take note of the changing scoring function due to the orthogonal paths it uses and begins to increase the attribution assigned to $feature$ $1$.  Meanwhile, GIG also increases the credit assigned to the first feature, but to a lesser degree.  We note that at this level of sampling from the uniform distribution interventional Shapley will be much more sensitive to the samples drawn than GIG because the paths of interventional Shapley traverse through the region of uncertainty much more than the straight-line path.  Interestingly, in the previous subsection, the fact that interventional Shapley gave equal attributions was solely a consequence of the piecewise continuous nature of decision trees yielding relatively sane estimates in the absence of any information.  While that extrapolation happened to be reasonable with a decision tree, a higher order continuous modeling technique might not be expected to be quite as lucky, particularly in asymmetric situations.
\begin{figure}
\centering
\includegraphics[height=.33\textwidth]{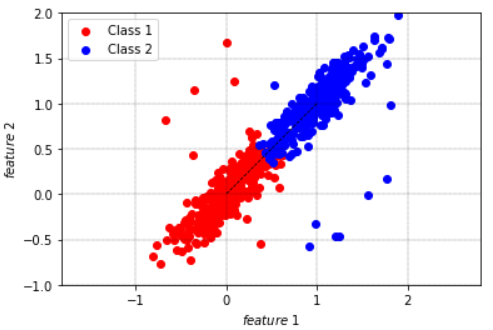}\hfill
\includegraphics[height=.34\textwidth]{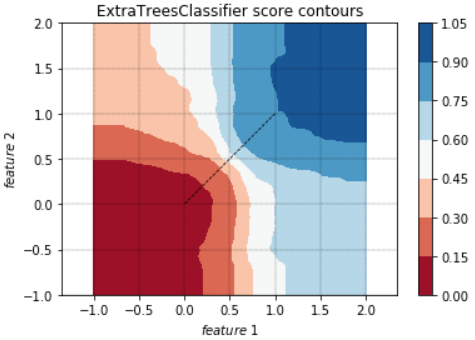}
\caption{\emph{(Left)} The $600$ points of the double Gaussian distribution along with $25$ from a uniform distribution.  The probability field away from the center line is beginning to be revealed.  \emph{(Right)}  The resulting prediction contours of a ExtraTreesClassifier trained from the double Gaussain distribution along with $25$ from a uniform distribution samples.}
\label{fig_exp1_25}
\end{figure}

\begin{figure}
\centering
\includegraphics[height=.33\textwidth]{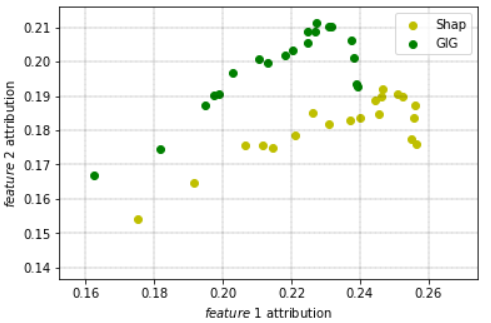}\hfill
\includegraphics[height=.33\textwidth]{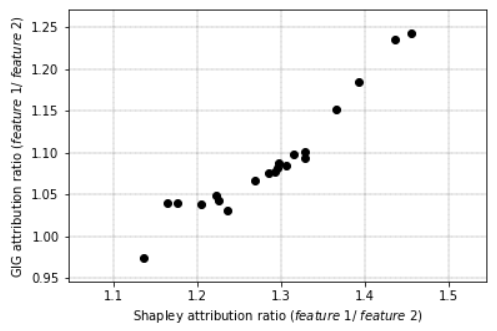}
\caption{\emph{(Left)} The attributions assigned by interventional Shapley and GIG for the double Gaussian distribution along with $25$ uniformly distributed points (see Figure \ref{fig_exp1_25}). \emph{(right)}  The ratio of the attributions shows that interventional Shapley begins to favor $feature$ $1$ for points further away from the transition region.}
\label{fig_exp1_attr_25}
\end{figure}

Continuing, to illuminate the true distribution with a \emph{background} uniform distribution we next consider $100$ points taken from it (see Figure \ref{fig_exp1_100}).  The ExtraTreesClassifier begins to solidify its estimation in the rest of the region.  Interventional Shapley follows suit and increases its attributions for $feature$ $1$.  The maximum ratio of attribution between the features exceeds $2.0$ for interventional Shapley (see Figure \ref{fig_exp1_attr_100}).  Meanwhile the attributions from GIG have not shifted all that much.  Of course this is directly tied to the path choice difference between the two methods.  Interventional Shapley's two paths traverse away from the core double Gaussian distributions into the region where the model is only just building certainty while GIG cuts right across through a region already well supported by samples.

\begin{figure}
\centering
\includegraphics[height=.33\textwidth]{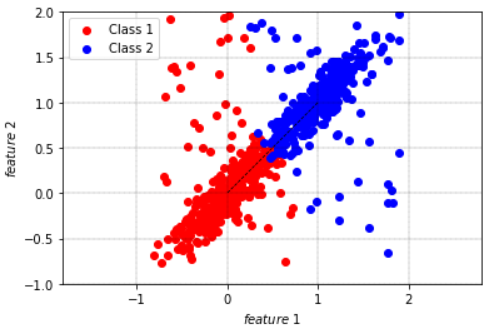}\hfill
\includegraphics[height=.34\textwidth]{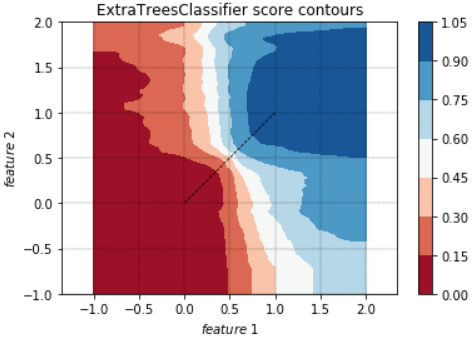}
\caption{\emph{(Left)} The $600$ points of the double Gaussain distribution along with $100$ from a uniform distribution.  The probability field away from the center line is beginning to be revealed.  \emph{(right)}  The resulting prediction contours of a ExtraTreesClassifier trained from the double Gaussain distribution along with $100$ from a uniform distribution samples.}
\label{fig_exp1_100}
\end{figure}

\begin{figure}
\centering
\includegraphics[height=.33\textwidth]{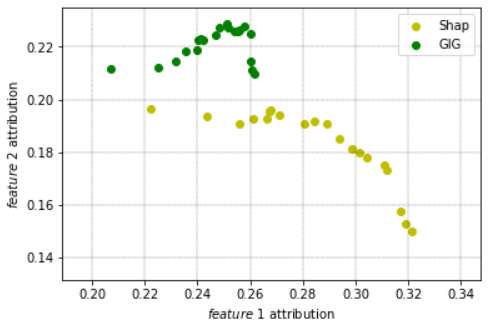}\hfill
\includegraphics[height=.33\textwidth]{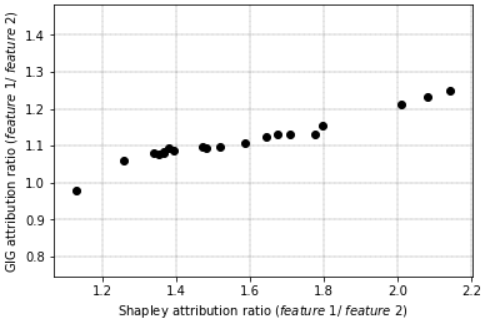}
\caption{\emph{(Left)} The attributions assigned by interventional Shapley and GIG for the double Gaussian distribution along with $100$ uniformly distributed points (see Figure \ref{fig_exp1_100}). \emph{(right)}  The ratio of the attributions shows that interventional Shapley is giving more and more attribution to $feature$ $1$ for points further away from the transition region.}
\label{fig_exp1_attr_100}
\end{figure}

As a last experiment we consider $200$ uniformly distributed points in addition to the $600$ from the double Gaussian distribution (see Figure \ref{fig_exp1_200}).  The decision tree contours are further improving their estimate of the actual classification.  This time a more significant change is seen in the resulting interventional Shapley values (see Figure \ref{fig_exp1_attr_200}).  Again, GIG remains relatively stable in attribution while interventional Shapley now implies that the first feature is as much as $6$ times as important as the second in producing the score.  A truly astonishing difference in attribution that can only be appreciated in light of the actual path choices of the two methods.
\begin{figure}
\centering
\includegraphics[height=.33\textwidth]{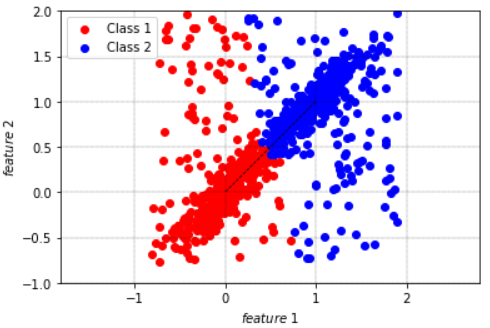}\hfill
\includegraphics[height=.34\textwidth]{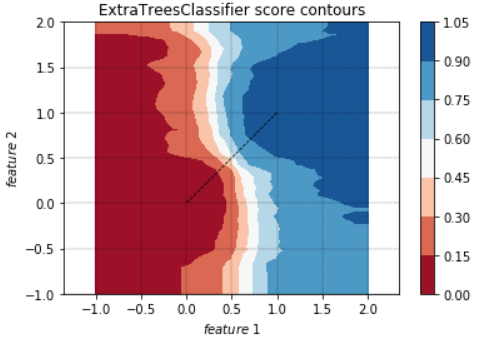}
\caption{\emph{(Left)} The $600$ points of the double Gaussain distribution along with $200$ from a uniform distribution.  The probability field away from the center line is beginning to be revealed.  \emph{(right)}  The resulting prediction contours of a ExtraTreesClassifier trained from the double Gaussain distribution along with $200$ from a uniform distribution samples.}
\label{fig_exp1_200}
\end{figure}

\begin{figure}
\centering
\includegraphics[height=.33\textwidth]{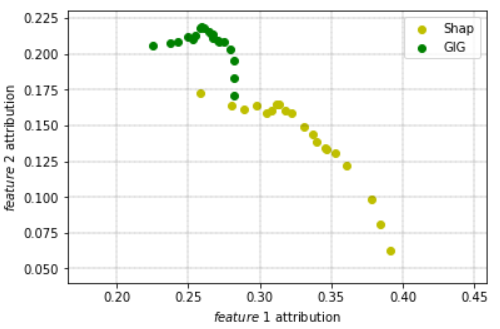}\hfill
\includegraphics[height=.33\textwidth]{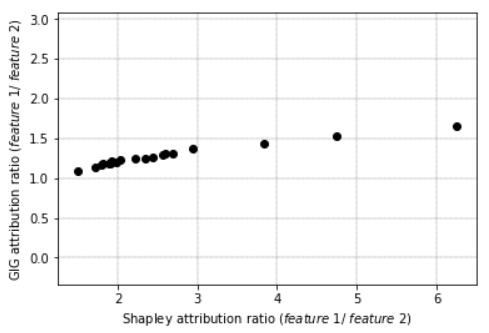}
\caption{\emph{(Left)} The attributions assigned by interventional Shapley and GIG for the double Gaussian distribution along with $200$ uniformly distributed points (see Figure \ref{fig_exp1_200}). \emph{(right)}  The ratio of the attributions shows that interventional Shapley is again giving more and more attribution to $feature$ $1$ for points further away from the transition region.}
\label{fig_exp1_attr_200}
\end{figure}

The ultimate message here is not that one algorithm is better than the other, but merely that explanations must be understood in light of the path choices of the respective methods.  If an atomic transition between reference and explanation point are meaningful to the machine learning problem at hand then interventional Shapley is the right choice.  Alternatively, if a straight-line transition has meaning, then Aumann-Shapley becomes the right choice.  This raises the question of what to do if the right choice is not clear?

In most situations a straight-line path between points will remain closer to the manifold of points than the paths of interventional Shapley.  Interventional Shapley attributions will be more sensitive to outliers.  For some problems, a decision tree could yield a sensible explanation in the regions through which the $n!$ paths of interventional Shapley pass.  But the fact that interventional Shapley often traverses regions where the model prediction is highly uncertain is not reassuring.  While in some cases it may not matter, in many cases, the straight-line path integration techniques will be more robust under uncertainty.

\section{Conclusions}
The path choice is fundamental to the meaning of explanations generated by game theory-based attribution methods.  Claims of uniqueness among such attribution methods stem from the nature of the path.  Interventional Shapley's $n!$ paths yield a uniqueness that is tied to the notion of an atomic game, where infinitesimal transitions are not relevant.  The straight-line path of Aumann-Shapley inevitably has unique qualities of its own.  For most modern machine learning applications the axioms that make these two methods unique are not relevant.  Despite this, such methods remain a fixture of modern model explainability because, in spite of their shortcomings, they still produce useful attributions in a computationally practical manner.  This, however, should not relieve them of more careful scrutiny and attention to where they might fail.

In particular we have shown that the path choice of interventional Shapley can generate attributions that are sensitive to model uncertainty in situations where the data does not conform to the characteristics of an atomic game.  This is due to the fact that models are built from a training data set that often has blind spots, and in these areas of missing data, the model's prediction can be uncertain or even arbitrary.  In such cases, at least some of the $n!$ paths of interventional Shapley will be much further away from the training data manifold than a simple straight-line path. The straight-line path will therefore create more stable attributions compared with interventional Shapley.

Alternatively, in data that does reflect an atomic game, one would think that Aumann-Shapley's straight line path could be sensitive to uncertainty due to the data missingness inherent in atomic games, however, as we have outlined, Generalized Integrated Gradients (GIG) \citep{merrill2019generalized} still generates Shapley values in such situations when they are modeled by decision trees.  That's because GIG, which extends the Aumann-Shapley method of straight-line path integration for infinetsimal games (first applied in \cite{sundararajan2017axiomatic}) to models with discontinuities orthogonal to input features, uses the Shapley value equation when assigning attribution at discontinuities and in continuous regions it is equivalent to IG.  This makes GIG ideal for applications involving mixed models and discrete scoring functions (e.g., forests of trees, compositions of trees and neural networks).



\bibliography{citations.bib}

\end{document}